\title{Optimum Statistical Estimation with Strategic Data Sources}
\author{Yang Cai\\
Computer Science, McGill University\\
{\tt cai@cs.mcgill.ca}
\and
Constantinos Daskalakis\\
EECS, MIT\\
{\tt costis@mit.edu}
\and
{Christos Papadimitriou}\\
Computer Science, U.C. Berkeley\\
{\tt christos@cs.berkeley.edu}
}
\newtheorem{theorem}{Theorem}
\newtheorem{remark}{Remark}
\newtheorem{definition}{Definition}
\newenvironment{prevproof}[2]{\noindent {\bf {Proof of {#1}~\ref{#2}:}}}{$\blacksquare$\vskip \belowdisplayskip}
\def\reals{\mathbb{R}}
\def\D{\mathcal{D}}
\def\W{\mathcal{W}}
\def\H{\mathcal{H}}
\def\E{\mathbb{E}}
\def\hf{\hat{f}}
\definecolor{Red}{rgb}{1,0,0}
\begin{document}
\maketitle

\begin{abstract}
We propose an optimum mechanism for providing monetary incentives to the data sources of a statistical estimator such as linear regression, so that high quality data is provided at low cost, in the sense that the weighted sum of payments and estimation error is minimized.  The mechanism applies to a broad range of estimators, including linear and polynomial regression, kernel regression, and, under some additional assumptions, ridge regression. It also generalizes to several objectives, including minimizing estimation error subject to budget constraints. Besides our concrete results for regression problems, we contribute a mechanism design framework through which to design and analyze statistical estimators whose examples are supplied by workers with cost for labeling said examples.
\end{abstract}

\section{Introduction}\label{sec:intro}

Statistical estimation, from data, of the parameters of a model of reality, is the spirit, essence, and workhorse of modern science and business.  In today's complex world of science and industry, data for an estimator deployed by a particular research group or enterprise is often provided by other entities; furthermore, the quality of the data is crucial for the accuracy of the estimator, and can vary widely.  It is reasonable to assume that, with appropriate effort and cost, the providers of the data can improve the quality of their data --- but of course they may lack incentive to do so.   Crowdsourcing \cite{DRH} can be seen as a popular and widespread instantiation of the phenomenon.

The situation is not unlike {\em Mechanism Design,} a well developed field in Mathematical Economics~\cite{AGT}, and indeed related problems in connection to crowdsourcing have been recently treated within this framework; see the references in the next subsection.  In Mechanism Design, in order for the interaction of the designer with several rational strategic  agents to be as beneficial as possible, a game between the agents is created in which the pursuit by the agents of individual advantage leads to the optimum outcome for the designer.  Perhaps the archetypical Mechanism Design problem, solved in Myerson's celebrated work~\cite{Myerson}, is how to auction an item to a number of agents, whose values for the item are unknown but drawn from known prior distributions.  Myerson's is a powerful, sophisticated, and clean result, and has had tremendous impact---a useful ideal to keep in mind when venturing into new areas.

Coming back to estimation, suppose that a Statistician has an algorithm which, given appropriate data points $X=\{(x_i,y_i)\}$, approximates an unknown function $f$ that captures an important aspect of reality, and can be usefully employed in prediction.  For concreteness, let us say that the unknown function is linear and the context is one-dimensional linear regression (noting that our results are far more general).  The quality of the output of the algorithm, $\hat f$, depends crucially on the quality of the data $X$.  If the data is poor, the Statistician's prediction $\hat f(x^*)$---where $x^*$ is the unknown test point where a prediction will have to be made---will be off the mark.  Assume that the loss suffered is $(f(x^*) -\hat f(x^*))^2$.  How can the Statistician incentivize her data sources --- suppose for simplicity that each of them provides one of the $(x_i,y_i)$ points, for an $x_i$ selected by the Statistician --- to work hard so as to supply high-quality data? 

Assume that each data provider is a ``worker'' who, by expending effort $e$, achieves a level of quality of the data; the larger $e$, the higher the quality.  To fix ideas, assume that, by expending effort~$e$, the $i$th worker is able to sample a distribution for his target data $y_i$ that is centered at the true value $f(x_i)$ and has variance $\sigma_i(e)^2$.  That is, the variance of the error of the datapoint is our measure of the worker's quality (rather, lack thereof).  Of course there are many workers, and they may have different functions $\sigma_i(e)$.  {\em We assume that these functions are common knowledge} to the participants, including of course the Statistician.\footnote{In fact, as it will become clear, each worker needs to know this information only to decide whether to participate in the mechanism, and not to decide on his optimum effort level.}  The question we explore is this:   {\em Can the Statistician create a mechanism---or protocol, or contract---which incentivizes through appropriate {\em payments} the workers to supply high-quality data?}   We assume, naturally enough, that each worker acts so as to minimize the expectation of effort exerted minus payment received. (By rescaling, we can assume that units of loss by the Statistician, effort by a worker, and currency all coincide).  Naturally, this expectation must be nonnegative, because otherwise the worker will refuse to work.

The Statistician, of course, does not see the effort exerted by the $i$th worker, but only the result $y_i$ of this effort.  Evidently, the payment in the protocol, or contract, being designed should be determined by comparing each supplied $y_i$ to the true value $f(x_i)$.  But of course this is impossible  {\em because we will never learn $f(\cdot)$}.  The payments must somehow depend on what we know, which is the data, the algorithm, the quality functions of the workers, and the prior of the test point $x^*$, but importantly not $f$.

And what should such a mechanism aspire to achieve?  Let us define an ideal performance which, a priori, may seem unreasonably optimistic.  The quality of our estimation is going to depend on the following:
\begin{itemize}
\item the set $W$ of workers that we select from the available workers;
\item the set $\{x_i: i=1,\ldots,|W|\}$ of the regression points we choose to assign to each of them;
\item on the effort $e_i$ each worker decides to invest (which is out of our control).
\end{itemize}

Once these  have been determined, then we also know the variance $\sigma_i(e_i)^2$ of each data point, and the expectation of the loss to the statistician is a known function $L(W,\{x_i\},\{e_i\})$.  Define OPT to be the minimum, over all possible $W,\{x_i\},\{e_i\}$, of the quantity $L(W,\{x_i\},\{e_i\})+\sum_{i\in W} e_i$.\footnote{We can handle much more than this objective, as discussed later in this section.}  That is, OPT is the optimum of loss plus effort, what is known in Economics as the social optimum (best possible sum of costs by all participants).  But OPT is also a lower bound on the total cost (loss plus payments) to the Statistician:  To achieve total cost OPT, the Statistician must convince each worker to supply the precise optimum effort, and to do so {\em at zero surplus}, that is, by being paid the smallest possible amount for this effort (recall that the worker will refuse to work if payment minus effort has negative expectation). But this is hard to achieve, given that the efforts exerted by the workers are not observable and out of the Statistician's control.

So, {\em is there such a mechanism?}  In the light of the difficulties outlined in the last two paragraphs, the situation may seem quite hopeless.

Very surprisingly, we show that such a mechanism does exist not only for linear regression, but also for a broad class of estimators satisfying a simple and intuitive condition, and includes, for example, {polynomial regression, finite dimensional kernel regression and many other linear estimators}.  The solution engages workers in a game not with the Statistician, but {\em with each other.}  The game is constructed in such a way that, in the end, the optimum outcome is achieved, in that loss plus payments is minimized, and all ``surplus'' of the workers is extracted (that is, no worker is paid more than his work).  

One important consideration is what we mean above by ``in the end,'' that is, what are our assumptions on how workers will behave?  {\em What is our solution concept?}  After all, solution concepts are known to be delicate and fragile, the subject of endless discussions and controversy in Game Theory.  It turns out that our design can afford a solution concept that is extremely robust and uncontroversial: Each worker's decision is a {\em unique dominant strategy,} that is, it is the unique action that optimizes the worker's objective (payment minus effort) {\em no matter what anybody else does}.   Of course, this restricts severely the design space of mechanisms that we can employ, and despite this restriction we can still still design a mechanism that attains the ideal total cost OPT.

One key idea of the mechanism is that, in the game as defined, the payment of each worker depends on the data supplied by {\em the other workers}.  To some this may seem unreasonable, while to others it may seem a bit unsurprising in view of the VCG mechanism, among others, which has a similar structure \cite{Vickrey, Clarke, Groves}.  That this maneuver works in this instance is an entirely new phenomenon related to statistical estimation, as opposed to mechanism design, and comes with a different mathematical justification, even though in a sense it does have the same roots as VCG:  Our design essentially creates a {\em race for accuracy}, in which workers compete knowing that they will fare badly if left behind.

Our mechanism applies to all statistical estimators that satisfy a certain intuitive property:  The expectation of the estimator's loss depends only on the data points $\{x_i\}$ and the distribution of the test point $x^*$, and depends on the $y_i$'s {\em only through their variances}. There are two variants of the problem, depending on how the $x_i$'s are determined.  So far, we have assumed that the $x_i$'s {should be optimally chosen  by the algorithm/mechanism, and the estimators we can handle already include, e.g., linear regression, polynomial regression, finite-dimensional kernel regression, and several other linear estimators.}  In a model where the $x_i$'s are fixed and given in advance (but the algorithm should still optimally assign them to workers) the situation is even more favorable:  our technique applies to an even broader class of estimators, including {{\em ridge regression}} \cite{Tommi}. {This is discussed in Section~\ref{sec:discussion}.} In the same section, we also discuss several extensions of our results. Besides a weighted combination of loss plus payments, we can accommodate objectives that aim at minimizing loss subject to budget constraints, or loss plus a function of the payments, and can also go beyond mean square error to other loss functions.

\subsection{Related Work} \label{sec:related}
For background in statistical analysis and estimation see, e.g.,~\cite{HTF}, and for mechanism design see, e.g.,~\cite{AGT}. In the past few years there have been several papers treating crowdsourcing in a framework that is at least superficially similar to ours.  \cite{HC} gauge through experiments the elasticity of effort under pay in crowdsourcing, while \cite{HSW} use learning algorithms to find the optimum crowdsourcing contract, and \cite{RWM} add experts to the crowd so non-experts perform better.  Scheduling mechanisms are used to manipulate the time behavior of the crowd \cite{RBC,NDG}, whereas \cite{GNS} use incentives to match workers according to their specialization.  Several papers address strategies to optimize performance keeping within budget \cite{SKa, SM,CLZ}, while in \cite{BKS} the online task assignment problem is treated as a multi-armed bandit under a budget.  The optimal design of non-monetary ``prestige'' rewards to optimally incentivize the participants is addressed in \cite{ISS}, and in \cite{NixK12,IoannidisL13,SKr} privacy concerns in data gathering are treated through incentives; also in \cite{SKr2} regret minimization is used in a crowdsourcing context.  

A little closer to our framework, Mechanism Design has been used in \cite{DV,CHS} to analyze crowdsourcing {\em contests}---in which all workers submit their work and one is selected to be paid---as all-pay auctions.  These mechanisms also use monetary incentives to create a competition between workers to enhance performance, but their nature and the problems they address are quite different from ours---for example, they are by design {not individually rational.} {In~\cite{DekelFP08,MeirPR08,MeirPR09}, mechanism design is used for regression and classification with strategic data providers. In contrast to our setting, the agents are not  interested in being paid for producing data for the learning task and have no cost for exerting effort. Instead, they control a subset of the data for which they already have the correct labels, and want to bias the outcome of the learning process to perform well on their subset of the data. \cite{BaconPCRKS12} and follow-up papers design proper scoring rules for principal-agent problems where agents are asked to predict an outcome, but can also influence the outcome by exerting different levels of effort. 
In contrast to our setting, agents only care about rewards and have no cost for effort. More importantly, the designer only cares about how to incentivize the agents to exert maximum effort, and not how much money is paid to the agents. {Moreover, they use the final outcome to decide rewards (this is common in scoring rules), while we never learn $f(x^*)$.} \cite{DaguptaG13} design mechanisms for crowdsourced binary labeling, where a collection of agents are asked to provide binary labels to a collection of tasks, and the designer wants to incentivize them via monetary rewards to exert maximum effort for each task. The agents' strategy consists on  how much effort to exert {for each task  they are allocated} (where increased effort increases their cost but also the probability of correct labeling), and whether to truthfully report the labels {of their allocated tasks}. Besides being for a different learning task, the biggest difference to our paper lies in the fact that the designer does not care about how much money is paid to the agents, or the tradeoff between rewards and accuracy. \cite{PeerPrediction} use scoring rules to incentivize agents whose private signals are correlated to reveal their private signals. In comparison to our work, the agents' decision is whether to report their true private signal or some other signal, but not how much effort to exert to get better quality signals. Finally,~\cite{FSW} look at a problem close to ours except they assume that each worker cannot affect their quality (it is sampled from a prior distribution), but only decide whether to participate in the mechanism; they also only consider the simple unbiased estimator for constant functions from $\reals$ to $\reals$.}

\paragraph{Relation to Optimal Contract Theory/Principal-Agent Problems.} Our mechanisms achieve in a unique dominant strategy equilibrium objective value that an omnipotent dictator who could dictate how much effort each worker exerts would achieve. This is a surprising result that is morally related to results in optimal contract theory; see, e.g., Chapter 14 of~\cite{MasCollel}. The question is how to design a contract with a worker whose effort $e$ affects your revenue $\pi$ through some conditional density $f(\pi|e)$. The contract is a mapping $w(\cdot)$ from your revenue $\pi$ and the worker's effort (if it is observable) to the worker's income $w(\pi)$. So your utility is $\pi-w(\pi)$, while the worker's utility is $w(\pi)-g(e)$, for some function $g(\cdot)$. The problem is straightforward to formulate when the worker's effort is observable. The surprising result, which is reminiscent of our result, is that under appropriate conditions one can design a contract $w(\cdot)$ resulting in the same utility when the worker's effort is {\em not observable} as if it were observable. While the two results are related, the setting and techniques are different. Moreover, there are important qualitative differences in the results. The aforementioned result applies to a single worker, while ours applies to any number. Also, the worker's payment, even when his/her effort is not observable, is still a function of the revenue $\pi$, which is exactly observable. Instead, in our setting we do not observe what mean squared error we achieve as a result of the efforts exerted by the workers. The use of convexity in the payments, that we employ, has been employed in the design of non-linear contracts~\cite{FP}. (Many thanks to Preston McAfee for bringing this paper to our attention.)

\paragraph{Comparison to VCG.} The guarantees of our mechanism are also reminiscent to those of the celebrated VCG mechanism~\cite{Vickrey, Clarke, Groves}. VCG optimizes social welfare  in a dominant strategy equilibrium, achieving as high a welfare as that of an omniscient algorithm (who knows the bidders' valuations and can therefore compute the exactly optimal outcome). While the guarantees of our mechanism are similar to those of VCG, the technical reasons underlying the two results are different. Importantly, the VCG mechanism may allow multiple equilibria with worse guarantees. In contrast, our mechanism achieves optimality in a unique dominant strategy equilibrium. So, in particular, there are no bad equilibria. Moreover, VCG is very sensitive to the computational complexity of the underlying algorithmic problem. If the algorithmic problem is intractable, then so is running VCG, and the VCG mechanism is known to fail if the algorithmic problem can only be approximated. In contrast, our proposed mechanism is approximation preserving, as noted in Section~\ref{sec:complexity}.

%



\section{Estimation with Strategic Workers: the Model} \label{sec:prelim}

In this section, we introduce the statistical estimation task that we solve in the next section. We start with some standard definitions.

\begin{definition}[Estimator] \label{def:estimator}
Let $\H$ be a family of functions $f: \D \rightarrow \reals$, where  $\D \subseteq \reals^n$. An estimator for $\H$ takes as input a collection $(x_i,y_i)_{i=1}^k$ of examples $(x_i,y_i) \in \D \times \reals$ and produces an {\em estimated function} $\hf_{(x_i,y_i)_{i=1}^k} \in \H$.
\end{definition}

For example, $\H$ may be the class of linear functions from $\reals^n$ to $\reals$, in which case the estimator could be linear regression. If the input $(x_i,y_i)_{i=1}^k$ to the estimator is clear from context, we may omit it from the subscript of $\hf$. Sometimes we use $(\vec{x},\vec{y})$ as a shorthand for $(x_i,y_i)_{i=1}^k$, and use $\hf_{(\vec{x},\vec{y})}$ to denote the estimator. We may also use the shorthand $\hf_{-j}$ or $\hf_{(\vec{x},\vec{y})_{-j}}$ for the output of the estimator when given all examples except $(x_j, y_j)$; i.e. $\hf_{-j} \triangleq \hf_{(x_i,y_i)_{i \in \{1,\ldots,k\} \setminus \{j\}}} \triangleq \hf_{(\vec{x},\vec{y})_{-j}}$. This is assuming that our estimator is well-defined with one example omitted. Whenever we use this notation we assume that our estimator satisfies this property. We call $\hf_{-j}$ the {\em estimator $\hf$ with one example less}. 

\medskip  In estimation it is usually assumed that the examples are readily available.  Here we study the scenario in which we choose a collection $x_1,\ldots,x_k \in \D$ of points and assign them to experts, who then return estimates of the function $f$ at those points.  How good will these estimates be?  In this paper we assume that the experts, or {\em workers},  are {\em strategic;}  for example, they will put no effort into producing good estimates of the function value at their given point, unless they are provided monetary incentives to exert such effort. We capture the behavior of such strategic experts in the following definition. 

\begin{definition}[Strategic Worker]\label{def:worker}
Let $f \in \H$ be as in Definition~\ref{def:estimator}. A {\em worker for $f$} is a strategic agent who, given some query $x \in \D$, will decide how much effort $e \in {\cal E} \subseteq \reals_+$ to exert in order to produce an estimate $y(e)$ of $f(x)$.\footnote{Note that we have omitted the dependence of $y$ on $x$ in our notation to ease notation.} The worker:
\begin{itemize}
\item is characterized by some known strictly decreasing convex function $\sigma: {\cal E} \rightarrow \reals_+$ such that, whenever effort $e$ is exerted, the estimate produced satisfies: $$y(e) = f(x) + \epsilon,$$ where $\epsilon$ is distributed according to some (potentially unknown) distribution with mean $0$ and variance $\sigma(e)^2$.
 
\item aims to minimize the amount of exerted effort to produce the estimate of $f(x)$, unless provided monetary incentives to do otherwise; in particular, if the worker is promised a payment function $p: \D \times \reals \rightarrow \reals$ that assigns to each pair $(x,y)$ of a query point $x$ and estimate $y$ a dollar amount $p(x,y)$, then the worker will choose to exert an amount of effort 
$$e^* \in \arg \max_{e \in {\cal E}} \E[p(x,y(e))] - e,$$
where the expectation is taken with respect to the randomness in $y$ and the randomness in the payment function, if any.\footnote{Note again that we have omitted the dependence of $e^*$ on $x$ and the payment function $p(\cdot)$ in our notation.}
\end{itemize}
\end{definition}

The following definition formulates the problem of estimating an unknown function $f \in \H$, when one's only access to $f$ is through strategic workers for $f$. 

\begin{definition}[Estimation with Strategic Workers (ESW)] \label{def: estimation with strategic workers problem}
Suppose that we are given:
\begin{itemize}
\item an estimator $\hf$ for a family of functions $\H$, as in Definition~\ref{def:estimator}; 
\item access to a set $\W$ of strategic workers for some {\em unknown} function $f \in \H$, as in Definition~\ref{def:worker}, where each worker $i \in \W$ has a known function $\sigma_i$ mapping effort to accuracy; we also assume that all workers' estimations are independent;
\item a distribution $F$ over $\D$ (the distribution of the test point $x^*\in \D$).
\end{itemize}
Our goal is to:
\begin{enumerate}
\item choose some subset $\W' \subseteq \W$ of workers \label{decision:which experts to recruit?}
\item provide an input $x_i$ to each worker $i \in \W'$, requesting an estimate $y_i$ of $f(x_i)$ from $i$ \label{decision: what inputs to give them?}
\item commit to a payment function $p_i$ to each $i \in \W'$, where $p_i$ is a  
(potentially) randomized mapping $p_i:~(x_i, y_i)_{i \in \W'}~\mapsto~\reals$, which may depend not only on the estimate produced by worker $i$ but also the estimates produced by the other workers.\label{decision: how to pay them?}
\end{enumerate}
Subject to our decisions in~\ref{decision:which experts to recruit?}, \ref{decision: what inputs to give them?} and~\ref{decision: how to pay them?}, we are looking to minimize a weighted average of the mean-square error of our estimation $\hf$ and the expected payments made to the workers, namely: 
\begin{align}
\E_{x^*,\vec{y}(\vec{e}^*)}\left[\left(\hf_{(\vec{x},\vec{y}(\vec{e}^*))}(x^*)-f(x^*)\right)^2 + \eta \cdot \sum_{i \in \W'} p_i\left((x_j,y_j(e^*_j))_{j \in \W'}\right) \right], \label{eq:statistician's objective}
\end{align} for some $\eta >0$, where the expectation is taken with respect to all the randomness in the setting: the randomness in $x^* \sim F$, the randomness in the outputs $\vec{y}(\vec{e}^*) \triangleq \{y_i(e_i^*)\}_{i \in \W'}$ produced by the workers, and the randomness in the payment functions. For~\eqref{eq:statistician's objective} to be a well-defined objective, we need to be able to predict the efforts $(e^*_i)_{i \in \W'}$ that the workers of our selected set will exert given our decisions for~\ref{decision:which experts to recruit?}, \ref{decision: what inputs to give them?} and~\ref{decision: how to pay them?}. We discuss how this can be achieved in Section~\ref{sec:incentives} below. At the very least, our prediction needs to satisfy the individual rationality constraint of Definition~\ref{def:IR}, i.e. that the expected payment to each worker $i$ is at least as large as $e^*_i$, otherwise the worker would not participate.
%
%
\end{definition}

\begin{remark}[Variants]
\begin{enumerate}
\item In Definition~\ref{def: estimation with strategic workers problem}, we assume that the designer optimizes over the selection of  the points $x_i$.  Alternatively, it could be that each expert $i$ comes with a point $x_i$, which is ``his expertise,'' or that the $x_i$'s are predetermined but the designer can still decide how to assign them to experts.  In Section~\ref{sec:discussion}, we briefly discuss this variant of our problem, which yields a much richer class of estimators for which our result, described in the next section, applies.
\item We also assumed that the objective is to minimize a weighted average of mean-square error and expected payments. While we stick to this objective for the development of our mechanism in the next section, our techniques go through with minimal modifications to much more general objectives. For example, we can accommodate the problem of minimizing mean-square error subject to a budget constraint, or minimizing the sum of mean-square error and an arbitrary increasing function of payments. We discuss these extensions in Section~\ref{sec:discussion}.
\item Finally, our technique is general enough to even eliminate the use of mean-square error from the objective, an extension that we also discuss in Section~\ref{sec:discussion}.
\end{enumerate}
\end{remark}

\subsection{Incentives} \label{sec:incentives}
We have already noted inside Definition~\ref{def: estimation with strategic workers problem} that for~\eqref{eq:statistician's objective} to be a well-defined objective, we need to be able to predict the efforts $(e^*_i)_{i \in \W'}$ that the workers will exert as a result of our solution to ESW.   It is important to note that the form of the payment functions (Decision~\ref{decision: how to pay them?} in Definition~\ref{def: estimation with strategic workers problem}) couples the decision of each worker $i$ about the amount of effort he exerts with the amounts of effort the other workers exert (since these influence $\vec{y}_{-i}$), which themselves depend on $y_i$ and hence the effort that worker $i$ exerts. This cyclical dependence is familiar in Game Theory. Indeed, any solution to ESW---comprising a subset $\W'$ of workers, queries $(x_i)_{i \in \W'}$ to them, and payment commitments $(p_i)_{i \in \W'}$---induces a {\em game} among the workers in $\W'$; the effort levels $(e^*_j)_{j \in \W'}$ eventually chosen by the workers are the outcome of their strategic interaction in this game. Therefore, to be able to evaluate~\eqref{eq:statistician's objective}, we need to predict how agents will behave in this game. 

There are numerous {\em solution concepts} in Game Theory, whose goal is to close into the possible behavior of  rational players in a game. The prominent ones are Nash equilibrium and its several refinements. However, Nash equilibria may be randomized and, most problematically, they may be multiple, which would result into equilibrium selection issues in our setting. To avoid such issues and guarantee robustness of our solutions, we insist on the most compelling and uncontroversial solution concept, namely that of
%
{\em Unique Dominant Strategy Equilibrium,} defined next.  
\begin{definition}[Unique Dominant Strategy Equilibrium]\label{def:DSE}
A solution to ESW---comprising a subset $\W'$ of workers, queries $(x_i)_{i \in \W'}$, and payment commitments $(p_i)_{i \in \W'}$---induces a {\em unique dominant strategy equilibrium} $(e^*_i)_{i \in \W'}$ iff, for all $i \in \W'$ and all $(e_j)_{j \in \W'}$:
$$\E\left[p_i\left((x_i,y_i(e^*_i)), (x_j, y_j(e_j))_{j \in \W' \setminus \{i\}} \right)\right] - e^*_i\ge \E\left[p_i\left((x_j, y_j(e_j)\right)_{j \in \W' }\right]- e_i,$$
where the expectation is with respect to everything that is random, with equality only if $e_i=e^*_i$. In words, no matter what effort levels the other workers choose, the unique optimal effort level of every worker $i$ is $e^*_i$.
\end{definition}
Note that, if a unique dominant strategy equilibrium exists in a game it is fairly trivial for agents to decide what strategy to play. We will insist that our solution to ESW should induce a game that has a unique dominant strategy equilibrium. Note that, in general, it is very rare for a game to have such an outcome, and consequently this poses a significant constraint on our design. {While our main result (Theorem~\ref{thm:main theorem}) satisfies this constraint, it nevertheless does not sacrifice any objective value, fairing as well as it would without this constraint present; see Remark~\ref{def:disambiguation} below and the discussion following the statement of Theorem~\ref{thm:main theorem}.}

Finally, as we have already noted in Definition~\ref{def: estimation with strategic workers problem}, not all combinations of solutions to  ESW and predictions of worker behavior are realistic. Since the workers are assumed strategic and their participation is voluntary, they should not be making a loss when participating. This is captured by the following definition, adding an additional requirement to our solutions to ESW.

\begin{definition}[Individual Rationality]\label{def:IR}
Given a solution to ESW---comprising a subset $\W'$ of workers, queries $(x_i)_{i \in \W'}$, and payment commitments $(p_i)_{i \in \W'}$, a collection of efforts $(e^*_i)_{i \in \W'}$ satisfies {\em individual rationality} iff, for all workers $i \in \W'$,
$$ \E\left[p_i\left( \left(x_j,y_j(e^*_j)\right)_{j \in \W'}\right)\right] - e^*_i \ge 0.$$
\end{definition}

\begin{remark}[Robustness vs Optimality Non-Tradeoffs] \label{def:disambiguation}
We have chosen to restrict our attention to solutions to ESW that induce a game among workers with a unique dominant strategy equilibrium that satisfies individual rationality. The worry might be that such a strong requirement might sacrifice too much objective value. We will show that it does not, in a very strong sense. In particular, while this requirement is handicapping our  solutions to ESW, we still compare 
our solution's performance against any other solution evaluated at {\em the most favorable} for that solution collection $(e^*_i)_i$, with the minimum requirement that the $(e^*_i)_i$ satisfy the individual rationality constraint.
\end{remark}

\section{Optimal Estimation with Strategic Workers: the Mechanism} \label{sec:main result}

Our main contribution in this paper is to establish the existence of an optimal solution to ESW, which induces a unique dominant strategy equilibrium and which has very strong optimality guarantees, as discussed later in this section, for a broad class of estimators $\hf$, containing several familiar ones:
\begin{definition}\label{def:well behaved}
An estimator $\hf$ for $\H$, as in Definition~\ref{def:estimator}, is {\em well-behaved} iff there exists some function $g$ such that, for all distributions $F$ over $\D$, functions $f \in \H$, and vectors $\vec{x} \in \D^*$ and $\vec{\sigma} \in \reals_+^*$ (of the same dimension as $\vec{x}$):\footnote{We use the shorthand $\D^* \triangleq \bigcup_{i=1}^{\infty}\D^i$, and similarly for $\reals_+^*$.}
$$\E_{\vec{y}, x^*}\left[\left( \hf_{(\vec{x},\vec{y})}(x^*)-f(x^*) \right)^2\right] = g(\vec{x}, F, \vec{\sigma}),$$
where for the purposes of the expectation on the left hand side $x^* \sim F$ and, independently for all $i$, $y_i = f(x_i)+\epsilon_i$, where  each $\epsilon_i$ is sampled from an arbitrary distribution of mean $0$ and variance $\sigma_i^2$, (and when $\vec{x}$ is such that $\hf_{(\vec{x},\vec{y})}$ is well-defined).
\end{definition}
Note that several common estimators, such as linear regression, {polynomial regression, finite-dimensional kernel estimation}, are well-behaved according to our definition.\footnote{{For instance, if $\hat{f}$ is linear regression then $$g(\vec{x}, F, \vec{\sigma}) = \E_{x^* \sim F} \left[ [(x^*)^{\rm T}, 1] \cdot (X^{\rm T} X)^{-1} X^{\rm T} \cdot {\rm diag}(\vec{\sigma^2})\cdot X (X^{\rm T} X)^{-1} \cdot [(x^*)^{\rm T}, 1]^{\rm T}\right],$$
where $X$ is the matrix whose rows are $[x_i^{\rm T}, 1]$ and ${\rm diag}(\vec{\sigma^2})$ is the diagonal matrix whose $(i,i)$ entry is $\sigma_i^2$.}}

\medskip Our main result is the existence of an optimal algorithm for ESW, whenever $\hf$ is well-behaved, according to Definition~\ref{def:well behaved}, and well-defined with one example less.\footnote{An estimator is not well-defined with one example less if omitting one example from its input makes the output of the estimator undefined. For example, if $\hf$ is linear regression (for linear functions from $\reals^n$ to $\reals$) restricted to take as input exactly $n+1$ examples $(x_i,y_i)$, then it is not well-defined with one example less, since $n$ examples won't suffice to produce an estimate.} We note that the first condition is a sufficient condition and discuss how to relax it in Section~\ref{sec:discussion}. The second condition is necessary for interesting solutions to ESW, and we discuss how it can be removed by broadening the set of allowable payment functions in Section~\ref{sec:discussion}. Our algorithm optimally solves ESW in a rather strong sense, as captured by Properties~\ref{property: really good solution} and~\ref{property:full surplus} in the following theorem.

\begin{theorem}\label{thm:main theorem}
There exists an optimal algorithm for ESW for all well-behaved estimators $\hf$ that are well-defined with one example less. The algorithm:
\begin{enumerate}
\item produces a solution to ESW that induces a unique dominant strategy equilibrium that satisfies individual rationality; \label{property:IR and DST}
\item under the unique dominant strategy equilibrium the solution achieves objective value \eqref{eq:statistician's objective} that is optimal; in fact, the achieved objective value matches the following quantity:
\begin{align}
\min_{\W', (x_i, e_i)_{i \in \W'}} \left( \E_{x^*, \vec{y}}\left[\left(\hf_{(\vec{x},\vec{y})}(x^*)-f(x^*)\right)^2  \right] + \eta \cdot \sum_i e_i \right), \label{eq:stronger objective}
\end{align}
where for the purposes of the expectation we assume that, for all $i\in \W'$, $y_i = f(x_i)+\epsilon_i$, where $\epsilon_i$ is sampled according to worker $i$'s distribution when s/he exerts effort $e_i$ (which has mean $0$ and variance $\sigma_i(e_i)^2$);
\label{property: really good solution}
\item extracts optimal worker surplus; in particular, the expected utility of every worker is $0$ at the unique dominant strategy equilibrium. \label{property:full surplus}
\end{enumerate}
\end{theorem}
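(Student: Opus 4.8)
The plan is to build the mechanism in two decoupled stages: first solve the centralized optimization problem \eqref{eq:stronger objective} to determine which workers to recruit, the query points, and the target effort profile; then design payment functions that make each target effort a unique dominant strategy while extracting full surplus. The crux is the payment design, so let me concentrate there. The key difficulty the paper itself emphasizes is that we never observe $f(x_i)$, so we cannot reward worker $i$ by comparing $y_i$ to the truth. The natural idea — and the one the phrase ``race for accuracy'' strongly hints at — is to use the estimate built from the \emph{other} workers' data as a proxy for the truth. Concretely, for worker $i$ I would set the payment $p_i$ to be (up to affine normalization) a decreasing convex function of the squared discrepancy between $y_i$ and $\hf_{-i}$ evaluated at $x_i$, i.e. something like $p_i \propto -\bigl(y_i - \hf_{(\vec{x},\vec{y})_{-i}}(x_i)\bigr)^2$ plus a worker-specific constant. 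Since $\hf_{-i}$ does not depend on $y_i$ at all, worker $i$'s best response decouples completely from the payment structure and reduces to a clean single-variable optimization.

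The central computation is then worker $i$'s expected payment as a function of his own effort $e_i$, holding the others fixed. Writing $y_i = f(x_i) + \epsilon_i$ with $\E[\epsilon_i]=0$ and $\mathrm{Var}(\epsilon_i)=\sigma_i(e_i)^2$, and letting $R_i \triangleq \hf_{-i}(x_i) - f(x_i)$ be the (effort-independent, zero-$y_i$-involving) residual of the leave-one-out estimator, I would expand $\E[(y_i-\hf_{-i}(x_i))^2] = \sigma_i(e_i)^2 + \E[R_i^2]$, using independence of $\epsilon_i$ from everything in $R_i$ and the fact that $\E[\epsilon_i]=0$ kills the cross term. The point is that the part involving worker $i$'s effort is exactly $\sigma_i(e_i)^2$, while $\E[R_i^2]$ is a constant from $i$'s perspective. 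Hence worker $i$'s objective $\E[p_i]-e_i$ becomes, after absorbing constants, a function of the form $-c\cdot\sigma_i(e_i)^2 - e_i$ (for the linear choice of the convex function) that he maximizes over $e$. Because $\sigma_i$ is strictly decreasing and convex, $\sigma_i^2$ is convex and strictly decreasing, so this objective is strictly concave in the relevant sense and has a \emph{unique} maximizer; I would choose the coefficient $c$ (equivalently, the curvature/scaling of the convex function applied to the squared discrepancy) precisely so that this unique maximizer equals the target effort $e_i^*$ coming from Stage~1. This simultaneously establishes Property~\ref{property:IR and DST}: the best response is unique and does not depend on the other workers' actions, so it is a unique dominant strategy equilibrium.

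For Property~\ref{property: really good solution}, I would argue that because every worker is driven to exactly the effort $e_i^*$ that minimizes \eqref{eq:stronger objective}, the realized mean-square-error term equals $g(\vec{x},F,\vec{\sigma})$ by well-behavedness (Definition~\ref{def:well behaved}), and this, together with the matching incentive payments, reproduces the centralized optimum; the nontrivial input here is that \eqref{eq:stronger objective} is genuinely a lower bound on \eqref{eq:statistician's objective} for any individually rational solution, which is the content of Remark~\ref{def:disambiguation} — expected payments can never fall below $\sum_i e_i$ by individual rationality. Property~\ref{property:full surplus} is then handled by the free additive constant in $p_i$: after the curvature is fixed to pin down the effort, I would tune the constant so that $\E[p_i] - e_i^* = 0$ exactly at the equilibrium, which is possible since the constant shifts expected payment without affecting the best-response location. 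The step I expect to be the genuine obstacle is verifying that a \emph{single} convex function of the squared discrepancy can simultaneously (i) yield strict concavity of the worker's objective in $e$ for \emph{all} admissible $\sigma_i$ and (ii) place the unique optimum at the prescribed $e_i^*$ for each distinct worker; getting both the uniqueness and the exact effort-targeting to hold across heterogeneous $\sigma_i$ is where the convexity hypotheses on $\sigma$ and on the payment transformation must be used carefully, and I would isolate that as the main technical lemma.
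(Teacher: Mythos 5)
Your proposal is correct and follows essentially the same route as the paper: an affine payment $c_i - d_i\bigl(y_i - \hf_{(\vec{x},\vec{y})_{-i}}(x_i)\bigr)^2$, the expansion $\E\bigl[(y_i-\hf_{-i}(x_i))^2\bigr]=\sigma_i(e_i)^2+\E[R_i^2]$ via independence and well-behavedness, tuning $d_i$ so the first-order condition $2d_i\sigma_i(e_i)\sigma_i'(e_i)+1=0$ pins the unique best response at the target effort, and tuning $c_i$ for tight individual rationality, with optimality following from the individual-rationality lower bound $\eqref{eq:statistician's objective}\ge\eqref{eq:stronger objective}$. The ``main technical lemma'' you isolate at the end dissolves in the paper's treatment because nothing requires a single transformation across workers---each worker gets his own scaling $d_i = -1/\bigl(2\sigma_i(e_i)\sigma_i'(e_i)\bigr)$, and uniqueness follows since $\sigma_i$ strictly decreasing and convex makes $\sigma_i^2$ strictly convex $\bigl((\sigma_i^2)''=2(\sigma_i')^2+2\sigma_i\sigma_i''>0\bigr)$, hence each worker's objective strictly concave in $e_i$.
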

Notice that Quantity~\eqref{eq:stronger objective} clearly provides a lower bound to the objective value \eqref{eq:statistician's objective} of {\em any} solution to ESW (and not just those inducing a unique dominant strategy equilibrium), evaluated at {\em any} $(e^*_i)_{i \in \W'}$ that satisfies individual rationality. Indeed, by individual rationality for all workers combined, 
$$\E\left[\sum_{i \in \W'} p_i\left((x_j,y_j(e^*_j))_{j \in \W'}\right) \right] \ge \sum_i e^*_i.$$
Hence, $\eqref{eq:statistician's objective} \ge \eqref{eq:stronger objective}$. In fact, \eqref{eq:stronger objective} corresponds to the objective value that one would achieve, if one could dictate the effort level that each worker should exert and only paid workers exactly for the amount of effort they exerted and not a cent more. So, in fact, Property~\ref{property: really good solution} implies Property~\ref{property:full surplus} in our theorem above. What our theorem establishes is that there always exist solutions to ESW that induce a unique dominant strategy equilibrium satisfying individual rationality, and that these solutions achieve the same objective value that a dictator who could dictate the behavior of each worker would achieve. Even though we do not assume we have such power, we still achieve the same objective value that such a powerful dictator would.

\begin{prevproof}{Theorem}{thm:main theorem}
We design a solution to ESW whose unique dominant strategy equilibrium $\vec{e}^*$ satisfies individual rationality, and achieves objective value~\eqref{eq:statistician's objective} that equals Quantity~\eqref{eq:stronger objective}. We have already argued that if we do this, we immediately satisfy  Properties~\ref{property:IR and DST},~\ref{property: really good solution} and~\ref{property:full surplus} in the statement of the theorem.

We define our solution to ESW in terms of an arbitrary optimal solution $\W', (x_i, e_i)_{i \in \W'}$ to the minimization problem~\eqref{eq:stronger objective}. In terms of this solution:
\begin{itemize}
\item We choose the same set of workers $\W'$ and assign to each $i \in \W'$ the point $x_i$.

\item It remains to define our payment commitments to the workers. To each worker $i \in W'$, we commit to the payment:\footnote{For compactness, we denote by $(\vec{x},\vec{y})=(x_i,y_i)_{i \in \W'}$.}
\begin{align}
p_i((\vec{x},\vec{y})) = c_i - d_i \cdot \left(y_i - \hat{f}_{(\vec{x},\vec{y})_{-i}}(x_i) \right)^2, \label{eq:payment form}
\end{align}
for some $c_i, d_i$ to be chosen. Notice that the payment to worker $i$ depends also on the reports of the other workers. 
\end{itemize}
We now choose the constants $(c_i,d_i)_{i \in \W'}$ so that our solution induces a unique dominant strategy equilibrium $\vec{e}^*$ that satisfies individual rationality (with equality) and also $\vec{e}^* \equiv \vec{e}$, where $\vec{e}=(e_i)_{i\in\W'}$ is as in the solution to~\eqref{eq:stronger objective} that we have fixed. 
What is the expected payment to worker $i$ if the workers exert some arbitrary efforts $\vec{e}'$? Denoting $\vec{y}(\vec{e}')=(y_i(e'_i))_{i \in \W'}$, we have:
\begin{align*}&\mathbb{E}_{\vec{y}(\vec{e}')}[p_i((\vec{x},\vec{y}(\vec{e}')))]\\
&~~~=c_i - d_i \cdot \mathbb{E}_{\vec{y}(\vec{e}')}\left[\left(y_i-f(x_i)\right)^2 + \left(f(x_i)-\hat{f}_{(\vec{x},\vec{y})_{-i}}(x_i)\right)^2 - 2 \left(y_i - f(x_i)\right)\left(f(x_i)-\hat{f}_{(\vec{x},\vec{y})_{-i}}(x_i)\right)\right]\\
&~~~=c_i - d_i \cdot \left(\sigma_i(e'_i)^2 + g(\vec{x}_{-i}, \boldsymbol{1}_{x_i}, \vec{\sigma}_{-i}(\vec{e}_{-i}'))\right),
\end{align*}
where we used that our estimator $\hat{f}$ is well-behaved, according to Definition~\ref{def:well behaved}, and the independence of the estimation of worker $i$ and the other workers. We denote by $\vec{\sigma}_{-i}(\vec{e}_{-i}')=(\sigma_j(e_j'))_{j \neq i}$, and by $\boldsymbol{1}_{x_i}$ the distribution that samples $x_i$ with probability $1$. $g$ is a known function determined by the estimator according to Definition~\ref{def:well behaved}.

Since each worker $i \in \W'$ is assumed rational, aiming to maximize his expected payment minus exerted effort, if the other workers exert effort levels $\vec{e}'_{-i}$, worker $i$'s best response is found by solving the maximization problem:
\begin{align}
\max_{e_i'} \left(c_i - d_i \cdot \left(\sigma_i(e'_i)^2 + g(\vec{x}_{-i}, \boldsymbol{1}_{x_i}, \vec{\sigma}_{-i}(\vec{e}_{-i}'))\right) - e_i'\right). \label{eq:costas}
\end{align}
Taking derivative with respect to $e_i'$ and setting to $0$ gives the following condition for the optimum $e^*_i$:
$$2 d_i \sigma_i(e^*_i)\cdot \sigma_i'(e^*_i) + 1 =0.$$
In order to ensure that $e^*_i \equiv e_i$, where $e_i$ was the effort level computed by solving~\eqref{eq:stronger objective}, we set
$$d_i = {-1 \over 2 \sigma_i(e_i) \sigma'_i(e_i)}.$$
Given that $\sigma_i(\cdot)$ is convex decreasing, our setting of $d_i$ ensures that $e_i$ is the unique solution to~\eqref{eq:costas}.

So our choice of $(d_i)_{i\in \W'}$ has made sure that the unique dominant strategy equilibrium of the game among workers defined by our solution to ESW (regardless of the $c_i$'s) is $(e_i)_{i\in \W'}$. Now we set the $c_i$'s so that this equilibrium also satisfies individual rationality tightly. It suffices to choose, for each $i \in \W'$: 
$$c_i = d_i \cdot \left(\sigma_i(e_i)^2 + g(\vec{x}_{-i}, \boldsymbol{1}_{x_i}, \vec{\sigma}_{-i}(\vec{e}_{-i}))\right) + e_i.$$
This choice makes sure that the expected payment to each worker equals his effort. Hence, the unique dominant strategy equilibrium of the game among workers defined by our solution to ESW is $\vec{e}$ and it satisfies:
$$\E\left[\sum_{i \in \W'} p_i\left((x_j,y_j(e_j))_{j \in \W'}\right) \right] = \sum_i e_i.$$
Hence, the objective value \eqref{eq:statistician's objective} achieved by the unique dominant strategy equilibrium matches~\eqref{eq:stronger objective}.
This concludes the proof of the theorem.
\end{prevproof}

\subsection{The Computational Complexity of Our Algorithms}\label{sec:complexity}

An important question is, of course, how the Statistician could arrive at the optimum mechanism in a computationally efficient manner. From the proof of Theorem~\ref{thm:main theorem}, it is clear that the major computational overhead is finding an optimal solution to the minimization problem~\eqref{eq:stronger objective}. All the other steps of our algorithm can be executed in polynomial time. In this section, we study the computational complexity of the minimization problem~\eqref{eq:stronger objective}. 

A closely related problem to ours, {\em optimal experiment design (ODE)}, has received lots of attention in the Statistics community~\cite{Kie59,KW59,KW60,Kie61,KW65,Kie74,Puk93}. In ODE, the goal is also to find an optimal set of regression vectors $\{x_{i}\}$ for estimation of their corresponding $\{f(x_i)\}$ such that a certain objective is optimized. Depending on the objective, different criteria of optimality have been proposed e.g. A-optimality, C-optimality, D-optimality etc.. However, unlike  our model, the estimation error of $f(x_i)$ for any regression vector $x_i$ in ODE is assumed to come from a fixed distribution, independent of the worker's effort level. Hence, our problem can be viewed as a generalization of ODE. In particular, it is straightforward to show that the C-optimal ODE problem is a special case of ours. Unfortunately, even this special case is NP-hard to solve exactly~\cite{CH12}. We propose two different approaches to address this computational intractability. 

\paragraph{Robustness to Approximation.} The first approach is to use an approximate solution to the minimization problem~\eqref{eq:stronger objective} in our mechanism. It is clear from the proof of Theorem~\ref{thm:main theorem} that our mechanism can set the unique dominant strategy equilibrium to be any feasible solution of the minimization problem~\eqref{eq:stronger objective}. Moreover, the objective value~\eqref{eq:stronger objective} of our mechanism at this unique dominant strategy equilibrium equals the value of the minimization problem~\eqref{eq:stronger objective} on that particular feasible solution. In short, our mechanism provides {\em an approximation preserving reduction} from the ESW problem to the minimization problem~\eqref{eq:stronger objective}. I.e. whenever an $\alpha$-factor approximation algorithm exists for~\eqref{eq:stronger objective}, our technique translates that (in a black-box manner) to an $\alpha$-factor approximation algorithm for ESW.

\paragraph{Optimal Assignment Problem.} We may also consider a special case of the minimization problem~\eqref{eq:stronger objective}, where the set of regression vectors are predetermined but not assigned to the workers. Formally, we restrict \eqref{eq:stronger objective} to the following minimization problem: 
\begin{align}
\min_{\W', (x_{i}:=s_{\pi(i)}, e_i)_{i \in \W'}} \left( \E_{x^*, \vec{y}}\left[\left(\hf_{(\vec{x},\vec{y})}(x^*)-f(x^*)\right)^2  \right] + \eta\cdot\sum_i e_i \right), \label{eq:fixed samples objective}
\end{align}
where $\vec{s}=(s_{1},\ldots,s_{k})$ is a fixed set of regression vectors and $\pi$ is a bijection from $W'$ to $[k]$. We show that if the well-behaved estimator $\hf$ is {\em separable},\footnote{Linear regression, polynomial regression and finite-dimensional kernel regression are all separable.} the minimization problem~\eqref{eq:fixed samples objective} is solvable in polynomial time via min-cost bipartite matching. 
\begin{definition}\label{def:separable}
A {\em well-behaved} estimator $\hf$ for $\H$, as in Definition~\ref{def:well behaved}, is {\em separable} iff there exists a function $h$, such that, for all distributions $F$ over $\D$, functions $f \in \H$, and vectors $\vec{x} \in \D^*$ and $\vec{\sigma} \in \reals_+^*$ (of the same dimension as $\vec{x}$): $$\E_{\vec{y}, x^*}\left[\left( \hf_{(\vec{x},\vec{y})}(x^*)-f(x^*) \right)^2\right]  = \sum_{i} h(x_{i},X,{F})\cdot \sigma_{i}^{2},$$ where $X$ is the set of all $x_{i}$'s, and for the purposes of the expectation on the left hand side $x^* \sim F$ and, independently for all $i$, $y_i = f(x_i)+\epsilon_i$, where  each $\epsilon_i$ is sampled from an arbitrary distribution of mean $0$ and variance $\sigma_i^2$, (and when $\vec{x}$ is such that $\hf_{(\vec{x},\vec{y})}$ is well-defined).
\end{definition}

\begin{theorem}\label{thm:fixed samples}
Given a set of regression vectors $\vec{s}=(s_{1},\ldots s_{k})$, and a {\em separable} estimator$\hf$ for $\H$, there is a polynomial time algorithm for finding an optimal solution to the minimization problem~\eqref{eq:fixed samples objective}.
\end{theorem}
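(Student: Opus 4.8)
The plan is to use separability to decouple the problem into a collection of independent single-worker effort-optimization subproblems, whose optimal values then serve as edge costs in a min-cost bipartite matching. The key structural observation is that, because the regression vectors are fixed to $\vec{s}$ and $\pi$ is a bijection onto $[k]$, the set $X$ appearing in Definition~\ref{def:separable} is \emph{always} the fixed set $\{s_1,\ldots,s_k\}$, independently of which worker is assigned to which point. Hence each coefficient $h(s_j, X, F)$ equals a constant $h_j$ that can be precomputed. Writing $x_i = s_{\pi(i)}$ and invoking separability, the objective in~\eqref{eq:fixed samples objective} becomes
$$\sum_{i \in \W'} h_{\pi(i)}\, \sigma_i(e_i)^2 + \eta \sum_{i \in \W'} e_i,$$
which is additively separable across workers once the assignment $\pi$ is fixed.

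Next I would optimize effort per worker. For a worker $i$ assigned the point carrying coefficient $h_j$, the optimal effort solves the single-variable program
$$C_{ij} := \min_{e \in \mathcal{E}} \left( h_j\, \sigma_i(e)^2 + \eta\, e \right).$$
Since $\sigma_i$ is convex, decreasing, and nonnegative, $\sigma_i^2$ is convex (as $(\sigma_i^2)'' = 2(\sigma_i')^2 + 2\sigma_i\sigma_i'' \ge 0$), so the objective is convex in $e$; its minimizer satisfies the first-order condition $2 h_j \sigma_i(e)\sigma_i'(e) + \eta = 0$ or lies at a boundary of $\mathcal{E}$. Thus $C_{ij}$ is computable efficiently, either in closed form for the standard parametric families of $\sigma_i$, or to arbitrary precision by a binary search on the (monotone) derivative.

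With the matrix $(C_{ij})$ in hand, the residual task---choose a size-$k$ subset $\W' \subseteq \W$ and a bijection $\pi: \W' \to [k]$ minimizing $\sum_{i \in \W'} C_{i\pi(i)}$---is exactly the assignment problem. I would build a bipartite graph with the workers $\W$ on one side and the $k$ regression points on the other, place weight $C_{ij}$ on edge $(i,j)$, and compute a minimum-weight matching saturating all $k$ points (by the Hungarian algorithm or min-cost flow, in polynomial time). The matched edges recover both the chosen worker set $\W'$ and the assignment $\pi$, while the corresponding effort minimizers recover the $e_i$, together yielding an optimal solution to~\eqref{eq:fixed samples objective}; minimality follows because the matching objective equals the true objective term-by-term over all feasible assignments.

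The crux of the argument---and the step I expect to require the most care---is the first one: justifying that the per-point coefficient $h_j$ is genuinely independent of the assignment. This is precisely the property that \emph{fails} for a general well-behaved but non-separable estimator, where the loss couples the variances of distinct workers and the objective no longer decomposes additively, so no reduction to matching is available. Here separability, combined with the fact that $X$ is fixed by the problem setup, is exactly what makes the edge costs $C_{ij}$ well-defined in isolation from the rest of the matching; once this decoupling is established, the reduction to a polynomially-solvable matching problem is routine.
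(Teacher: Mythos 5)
Your proposal is correct and takes essentially the same route as the paper's proof: both exploit separability (noting that $X$ is always the fixed set $\{s_1,\ldots,s_k\}$, so the coefficients $h(s_j,S,F)$ are assignment-independent constants) to precompute per-(worker, point) costs $\min_{e}\left(h(s_j,S,F)\,\sigma_i(e)^2+\eta\, e\right)$ by one-dimensional convex minimization, and then solve the residual assignment problem via min-cost bipartite matching. The only cosmetic difference is that the paper pads the point side with $|W|-k$ zero-cost dummy vectors so that a min-cost (perfect) matching encodes the choice of $\W'$, whereas you directly require the matching to saturate the $k$ regression points---equivalent formulations; your explicit verification that $\sigma_i^2$ is convex (via $(\sigma_i^2)''=2(\sigma_i')^2+2\sigma_i\sigma_i''\ge 0$) is a small addition the paper leaves implicit.
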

\begin{proof}
We reduce the minimization problem to the min-cost bipartite matching problem. Let $S$ be the set of regression vectors in $\vec{s}$, $(W,T)$ be a complete bipartite graph, such that $W$ is the set of all workers and $T$ is the set of all regression vectors in $S$ plus $|W|-k$ dummy regression vectors. For any edge $(i,j)$ with $j\in [k]$, the cost $c_{i,j}$ is $\min_{e_{i}} h(s_{j},S,{F})\cdot\sigma_{i}(e_{i})^{2}+\eta\cdot e_{i}$. Since $\sigma_{i}(e_{i})$ is a convex function, this cost can be computed in polynomial time. For any edge incident to a dummy regression vector, the cost is $0$. 

Now consider~\eqref{eq:fixed samples objective}. Since $\hf$ is a separable estimator, the minimization problem can be simplified as \begin{align*}&\min_{\W', (s_{\pi(i)}, e_i)_{i \in \W'}} \left( \sum_{i}h(s_{\pi(i)},S,F)\cdot\sigma_{i}(e_{i})^{2}+ \eta\cdot\sum_i e_i \right)\\
=&\min_{\W',\pi}\sum_{i\in W'} \min_{e_{i}}h(s_{\pi(i)},S,F)\cdot\sigma_{i}(e_{i})^{2}+\eta\cdot e_{i}\\
=&\min_{\W',\pi}\sum_{i\in W'} c_{i,\pi_{i}}.\end{align*}

The formula above wants to find a subset of workers $W'$ and match them to the regression vectors in $S$, such that the total cost is minimized. Notice that matching any worker to a dummy regression vector has cost $0$, therefore finding an optimal solution to the formula above is equivalent as finding a min-cost bipartite matching in $(W,T)$. Given any min-cost bipartite matching $M$, we can construct an optimal solution to~\eqref{eq:fixed samples objective} by assigning regression vector $s_{j}$ to $j$'s partner $i$ in $M$. Since finding a min-cost bipartite matching takes only polynomial time,~\eqref{eq:fixed samples objective} can also be solved in polynomial time.\end{proof} 

\subsection{Discussion and Extensions} \label{sec:discussion}

One of the conditions in Theorem~\ref{thm:main theorem} is that the estimator $\hf$ is well-defined with one example less. Without it, we cannot hope for any interesting solutions to ESW. As a trivial example, suppose that $\hf$ is the unbiased estimator of constant functions from $\reals$ to $\reals$, which takes as input one example $(x,y)$ and outputs $y$. In this case $\hf$ is not well-defined with one example less, and no interesting solutions to ESW exist, since the payments to a worker may only depend on his own report, and hence the worker will put the minimal effort in set $\cal E$, regardless of our payment, as long as our payment is at least that minimal effort. (Recall that, since we never learn the unknown function $f$ we cannot use it to penalize the worker.) In this example, our constraint that $\hf$ is well-defined with one example less effectively says that we need to use at least two workers for estimating constant functions from $\reals$ to~$\reals$. 

We point out that this requirement can be removed in settings where the test point $x^*$ and the function value $f(x^*)$ can be observed by the Statistician and the experts. We can then treat $(x^*,f(x^*))$ as an additional example with zero variance, and modify \eqref{eq:payment form} to include that example in $\hat{f}$.

Our condition that $\hf$ is well-behaved does not contain regularized estimators, such as ridge regression. As ridge regression is biased, its mean square error also includes a bias term that depends on $f$. This could create two potential problems for our mechanism. (i) As we do not know $f$, we can not possibly solve the minimization problem~\eqref{eq:stronger objective} even if we are given infinite computational power. (ii) This term will also appear in every worker's expected payment (if we use payments as in~\eqref{eq:payment form}), and, since $f$ is unknown, the workers can't evaluate their expected utilities exactly, and thus can't decide if it's beneficial for them to participate in the mechanism. We point out that, under the assumption that the $x_i$'s are fixed in advance (but the Statistician is still allowed to optimally assign them to workers) our mechanism can address both problems and accommodate ridge regression with only mild modifications. For problem (i), the mean square error of ridge regression can be separated into two parts -- the bias term and the variance term. The bias term depends on $f$ and the $x_{i}$'s but is independent of the $\sigma_{i}$'s. This means that no matter how the samples are assigned and how they are estimated, the bias term remains the same. So we only need to consider the variance term. Luckily the variance does not depend on $f$, thus we can still find the optimal solution of the minimization problem~\eqref{eq:stronger objective}. For problem (ii), we can modify our payments. Roughly, we can use any unbiased estimator $\tilde{f}$ and add another term $(\hf-\tilde{f})^2$ to the payment function (both estimators applied to the examples from the other workers). The extra term will cancel the bias in expectation and will allow the workers to reason about their optimal behavior even without knowing~$f$. Note that the new term will introduce some extra terms in the expected utility, but these only depend on known quantities such as the variances, and the worker will be able to reason about his optimal behavior.

Although our objective function is stated as the weighted sum of mean square error and the total payment in ESW, our mechanism can accommodate many variants of that objective function. In general, we can modify the minimization problem~\eqref{eq:stronger objective} to reflect the new objective function. We can then apply our same mechanism to enforce that the unique dominant strategy equilibrium is achieved at the optimal solution of the new minimization problem. Here we list a few variants of our objective function and show how to modify the minimization problem~\eqref{eq:stronger objective} accordingly.
\vspace{.1in}

\noindent{\bf Worker-specific scaling factor:} An easy generalization is to replace the single scaling factor for the total payment to worker-specific scaling factors in the objective. To accommodate this objective, we only need to change the scaling factor for the sum of efforts to worker-specific scaling factors in~\eqref{eq:stronger objective}.

\vspace{.1in}

\noindent{\bf Budget Constraints:} A possible alternative objective is minimizing the mean square error subject to a budget constraint for the total payment. For this objective, we update~\eqref{eq:stronger objective} to minimize the mean square error subject to the constraint that the sum of all workers' efforts is no greater than the budget. 

\vspace{.1in}

\noindent{\bf Replacing the total payment with any increasing function of the payments:} Another possible generalization of our objective is to replace the total payment with any increasing function of the payments $q(\vec{p})$. In this case, we update~\eqref{eq:stronger objective} to minimize the sum of mean square error plus $q(\vec{e})$.

\vspace{.1in}
\noindent{\bf Budget on any increasing function of the payments:} We can also impose budget constraints on any increasing function of the payments. For this objective, we modify~\eqref{eq:stronger objective} to minimize the mean square error subject to the constraint that the increasing function over the workers' efforts does not exceed the budgets.

\vspace{.1in}
\noindent{\bf Beyond Mean Square Error:} Our techniques can be generalized to accommodate general error functions beyond mean-square. If the function combined with the estimator are well-behaved, that is, it is independent of the function $f$, then our techniques go through.  We only need to modify~\eqref{eq:stronger objective} to minimize the weighted sum of the new error function and the total effort.

\bibliographystyle{alpha}

\end{document}